\documentclass[accepted]{uai2026} 
                        
\usepackage[american]{babel}

\usepackage{natbib} 
\usepackage{mathtools} 
\usepackage{booktabs} 
\usepackage{tikz} 

\usepackage{amssymb}
\usepackage{mathtools}
\usepackage{amsthm}
\usepackage{bm}
\usepackage{algorithm}
\usepackage{algpseudocode}
\usepackage{caption}
\usepackage{subcaption}

\usepackage{amsmath}
\theoremstyle{plain}
\newtheorem{theorem}{Theorem}[section]
\newtheorem{proposition}[theorem]{Proposition}

\theoremstyle{definition}
\newtheorem{definition}[theorem]{Definition}

\theoremstyle{remark}

\usepackage{wrapfig}

\title{Learning Lineage-guided Geodesics with Finsler Geometry}

\author[1,2]{Aaron Zweig\textsuperscript{*}}
\author[2,3]{Mingxuan Zhang\textsuperscript{*}}
\author[1,3,4]{David A. Knowles}
\author[2,3,4,5,6]{Elham Azizi}

\affil[1]{%
    New York Genome Center
}
\affil[2]{%
    Irving Institute for Cancer Dynamics, Columbia University
}
\affil[3]{%
    Department of Systems Biology, Columbia University
}
\affil[4]{%
    Department of Computer Science, Columbia University
}
\affil[5]{%
    Department of Biomedical Engineering, Columbia University
}
\affil[6]{%
    Data Science Institute, Columbia University
}
  
\begin{document}
\maketitle

\renewcommand\thefootnote{*}
\footnotetext{These authors contributed equally.}

\renewcommand\thefootnote{\arabic{footnote}}

\begin{abstract}
Trajectory inference investigates how to interpolate paths between observed timepoints of dynamical systems, such as temporally resolved population distributions, with the goal of inferring trajectories at unseen times and better understanding system dynamics.  Previous work has focused on continuous geometric priors, utilizing data-dependent spatial features to define a Riemannian metric.  In many applications, there exists discrete, directed prior knowledge over admissible transitions (e.g. lineage trees in developmental biology).  We introduce a Finsler metric that combines geometry with classification and incorporate both types of priors in trajectory inference, yielding improved performance on interpolation tasks in synthetic and real-world data.
\end{abstract}


\section{Introduction}\label{sec:intro}
In many scientific domains, data are observed at discrete timepoints while the underlying system evolves continuously in time. Trajectory inference aims to reconstruct continuous paths between empirical distributions, enabling interpolation to unseen times and analysis of system evolution.
Single-cell omics provides a prominent instance of this setting.  Although single-cell RNA sequencing is destructive, sampling at multiple timepoints enables a weakly temporal perspective on the evolution of cells, particularly in developmental settings where cell differentiation is similar among all healthy embryos.  However, the inability to sample continuously means a practitioner typically only has access to the cell distribution at a small number of timepoints.

In the context of trajectory inference, optimal transport (OT) is a popular modeling framework across applications because it follows the principle of minimal energy, where the inferred trajectories are optimal with respect to some underlying cost function.  In the context of Riemannian metrics, these trajectories are geodesics.  However, only in very special cases (Euclidean space or hyperspheres) are the geodesics available in closed form, and must otherwise be learned.

We are interested in settings where the underlying metric is data-dependent, and informed by prior domain knowledge.  Encouraging trajectories to stay near observed samples (e.g. cells)~\citep{kapusniak2024metric} provides a geometric prior, but one may also include a discrete prior in the form of directed constraints over admissible transitions (e.g. lineage information).  For example, if the literature on developmental cell states in a particular organism demonstrates that one state typically differentiates into another, we seek to enforce that knowledge in our metric to encourage biologically plausible trajectories. Crucially, such  priors arise when transitions are known to be structured, directed, or partially ordered (e.g., stage progressions, causal precedence, or permitted state changes), and cannot be captured by symmetric (Riemannian) distances alone.

In this work, we apply Finsler geometry to incorporate discrete, directed transition priors into the geometry used for trajectory inference. Single-cell developmental lineage trees serve as a practical example of this setting in real single-cell RNA sequencing data.  Specifically our contributions include (i) the definition of a Finsler metric conditioned on a directed adjacency matrix representing admissible transitions (lineage tree prior), (ii) formal proof that this metric induces well defined local geometry structure and enforces trajectories that agree with the directed  prior, and (iii) demonstration that this Finsler metric may be easily incorporated with geometric priors to improve the accuracy of the trajectories and interpolation of unseen timepoints.

\section{Preliminaries}

\begin{figure}[t]
     \centering
     \includegraphics[width=0.5\textwidth]{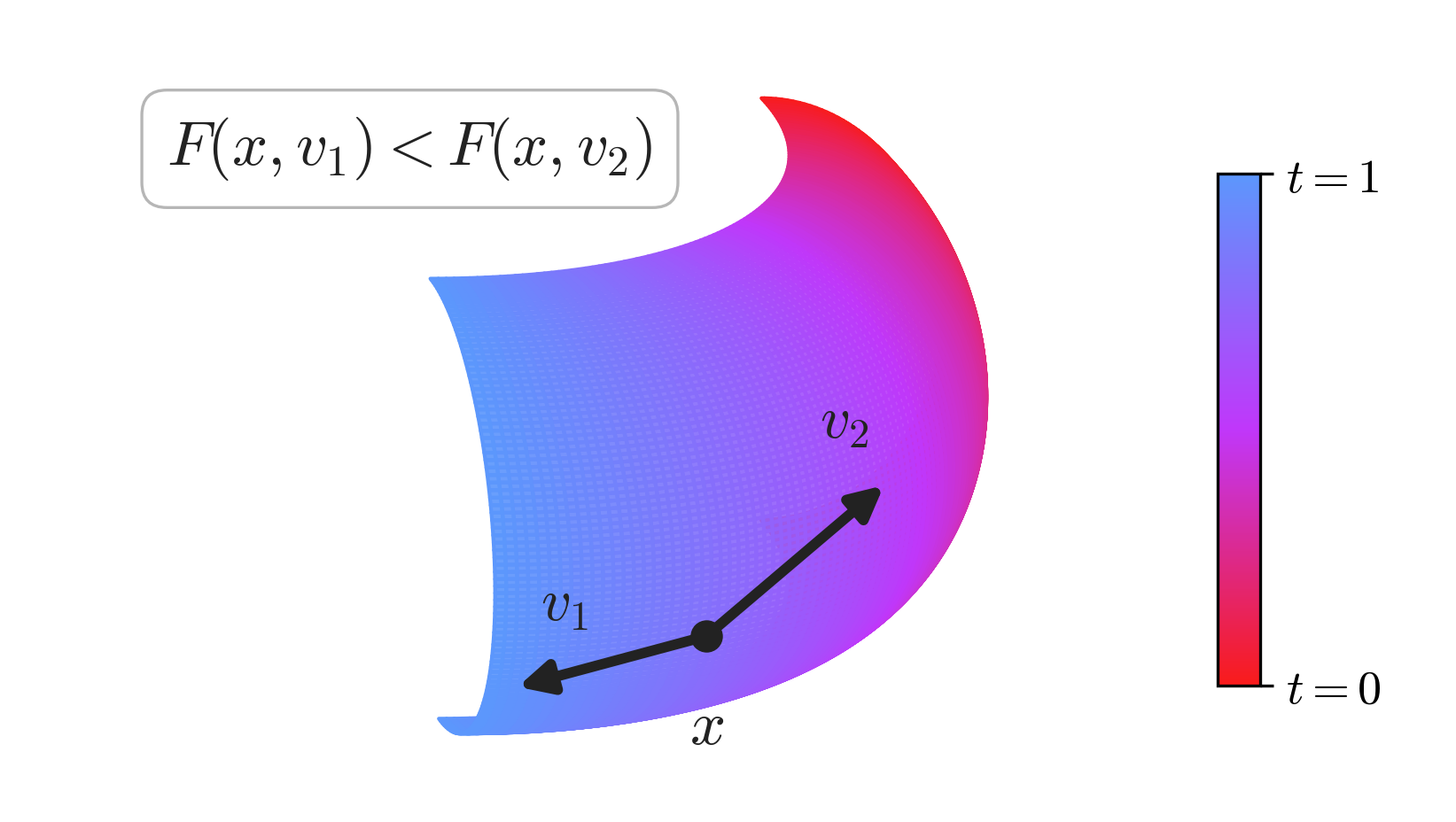}
    \caption{A visual representation of the Finsler metric as a penalty.  For temporal data evolving from $t=0$ (red) towards $t=1$ (blue), we want to define a local metric at $x$ such that, if $v_1$ agrees with the lineage prior and $v_2$ contradicts the prior, then $F(x, v_1) < F(x, v_2)$, i.e. the metric acts as a penalty on disagreeing with the classification signal.}
    \label{fig:manifold}
\end{figure}

\subsection{Notation}

We let $\mathbb{R}_+ = [0, \infty)$ denote non-negative reals, and denote the ReLU operator as $(\cdot)_+$.  We use $\{e_i\}_{i=1}^n$ to denote the standard basis on $\mathbb{R}^n$.  For a map $f: X \rightarrow \mathbb{R}^n$, we'll denote $f_i(x)$ as the $i$th index of $f(x)$.

\subsection{Riemannian Metric}
    We consider a smooth, orientable manifold $\mathcal{M}$ with metric tensor $g$ such that, for any $x \in \mathcal{M}$ and tangent vectors $u, v \in T_x \mathcal{M}$, we have the inner product $\langle u, v \rangle_{g(x)}$ and the associated norm $\|v\|_{g(x)} = \sqrt{\langle v, v \rangle_{g(x)}}$.

    In this setting, we have a well-defined notion of distance,
    \begin{align}
        d_g(x,y) = \inf_{\substack{{\gamma: [0,1] \rightarrow \mathcal{M}}\\ \gamma(0) = x, \gamma(1) = y}} \int_0^1 \|\dot{\gamma}(s)\|_{g(\gamma(s))} ds.
    \end{align}

A geodesic is a critical point of the energy functional,
\begin{align}
    E_g(\gamma)
    = \inf_{\substack{{\gamma: [0,1] \rightarrow \mathcal{M}}\\ \gamma(0) = x, \gamma(1) = y}} 
    \frac{1}{2}\int_0^1
    \|\dot{\gamma}(s)\|_{g(\gamma(s))}^2 \, ds.
\end{align}

Among curves with fixed endpoints, constant-speed minimizers of the energy coincide with minimizers of the distance.

\subsection{Finsler Metric}

The notion of a Riemannian metric may be generalized beyond a local inner product, to define a local asymmetric norm via the formalism of Finsler geometry~\citep{dahl2006brief}.  We have the following definitions:

\begin{definition}
    Given a vector space $X$, a function $p: X \rightarrow \mathbb{R}_+$ is an \emph{asymmetric norm} if it satisfies the conditions:
    \begin{enumerate}
        \item \textit{Triangle inequality}: For all $x, y \in X$, $p(x+y) \leq p(x) + p(y)$.
        \item \textit{Positive homogeneity}: For all $x \in X$, $\lambda \in \mathbb{R}_+$, we have $p(\lambda x) = \lambda p(x)$.
        \item \textit{Non-degeneracy}: $p(x) = 0$ if and only if $x = 0$.
    \end{enumerate}
\end{definition}

This definition is only distinct from a norm in that the homogeneity condition applies only to non-negative scalars, which enables $p(x) \neq p(-x)$.

\begin{definition}
    Given a manifold $\mathcal{M}$, we say $F: T\mathcal{M} \rightarrow \mathbb{R}_+$ is a \emph{Finsler metric} if for every $x \in \mathcal{M}$, $F(x, \cdot)$ defines an asymmetric norm on $T_x \mathcal{M}$.
\end{definition}

Crucially, the notions of geodesics, length and energy carry over to Finsler geometry~\citep{dahl2006brief}, with the corresponding definitions

\begin{align}
    d_F(x,y) &= \inf_{\substack{{\gamma: [0,1] \rightarrow \mathcal{M}}\\ \gamma(0) = x, \gamma(1) = y}} \int_0^1 F(\gamma(s), \dot{\gamma}(s)) ds, \\
    E_F(\gamma)
    &= \inf_{\substack{{\gamma: [0,1] \rightarrow \mathcal{M}}\\ \gamma(0) = x, \gamma(1) = y}} 
    \frac{1}{2}\int_0^1
    F(\gamma(s), \dot{\gamma}(s))^2 \, ds.
\end{align}

\section{Related Work}

\subsection{Trajectory Inference Methods}

Many popular methods for trajectory inference rely on ordinary differential equations (ODEs), either learned directly  with Neural ODEs~\citep{chen2018neural, tong2020trajectorynet} or implicitly through the regression loss of flow matching~\citep{lipman2023flow, tong2024improving}.  Trajectory methods specific to single-cell transcriptomics typically rely on similar tools such as optimal transport~\citep{schiebinger2019optimal} or noisy biological signals that approximate velocity~\citep{bergen2020generalizing}.

In particular, \citet{kapusniak2024metric} introduced a geometric prior perspective within flow matching with a learned Riemannian metric, among other methods for learning an underlying metric from data~\citep{aizenbud2025estimation, moon2019visualizing}.  Related generative models consider generalizing from ODEs to stochastic differential equations (SDEs) with parameterizations of a Schr\"{o}dinger Bridge between distributions~\citep{tong2024simulation}. These approaches assume symmetric local costs induced by Riemannian structure. In contrast, our setting incorporates additional discrete and directed priors that cannot be expressed through symmetric metrics.

\subsection{Cell Lineage Tree Priors}

Other works have used prior knowledge about the lineage tree to bias inference.  LineageOT~\citep{forrow2021lineageot} uses the lineage tree to infer direct ancestors in a maximum likelihood estimate, but requires an estimate of the most recent time that each cell changed state.  Moslin~\citep{lange2024mapping} uses a fused Gromov-Wasserstein loss that matches cell topology to lineage tree topology, but is limited by the cubic complexity of running Gromov-Wasserstein solvers.

\subsection{Riemannian and Finsler Metrics}

Many applications require calculation of geodesics under Riemannian geometry, often relying on numerical methods~\citep{kimmel1998computing,crane2017heat}.  Specific applications in machine learning use the same machinery, especially in the context of flow matching~\citep{de2025pullback,chen2024flow,kapusniak2024metric} or single-cell genomics~\citep{palma2025enforcing}.  The application of Finsler metrics is substantially less common. The main application we are aware of is extending metric learning algorithms to Finsler geometry in~\citet{dages2025finsler}, where a pairwise distance matrix is given. However, in our problem the metric itself is learned and geodesic distances must be approximated during training.

\section{Lineage-Informed Finsler Geometry}

\subsection{Lineage Tree Guided Finsler Metric}

We consider a single cell dataset with cells represented in as vectors in a reduced dimension $\mathbb{R}^n$ and cell-type labels from the discrete set $C$.  We assume access to a lineage tree with adjacency matrix $A \in \{0,1\}^{|C|\times|C|}$, such that $A_{ij} = 1$ if there is a direct edge from class $i$ to class $j$ (i.e., cell-type/state $i$ is known to be able to differentiate into state $j$), and assume $A_{ii} = 1$ indicating cells can remain in their current state.  Given this prior, we first seek to define a sensible Finsler metric.  Namely, we want to encourage geodesics in ambient space such that the induced trajectories of categorical cell-type distributions on the simplex $\Delta^{|C|}$ follow the tree.

To that end, we consider imposing a penalty on trajectories that instantaneously contradict the tree structure.  Let $f: \mathbb{R}^n \rightarrow \Delta^{|C|}$ denote a learned cell-type classifier that outputs a probability vector over the $|C|$ classes. Let $x_t$ parameterize a trajectory from $x_0$ to $x_1$.  We propose a penalty of the form,
\begin{align}\label{eq:penalty}
    \sum_{c \in C} f_c(x_t) \langle \partial_t f(x_t),(\mathbf{1} - A^T)e_c \rangle_+
\end{align}

where $\mathbf{1}$ is a matrix of all ones.  In other words, we take a sum over all classes weighted by the classifier-predicted probability at $x_t$, and for each class $c$ we penalize if the trajectory's velocity has positive correlation with the vector of \textit{illegal directions} from class $c$.  Note that we take the transpose of $A$ because it is acting as an asymmetric Markov Chain and so acts on vectors multiplied on the left.

In order for the Finsler metric to be defined independently of a given trajectory, we propose,
\begin{align}\label{eq:finsler_metric_definition}
    \tilde{F}(x,v) = \sum_{c \in C} f_c(x) \langle v, Jf(x)^T(\mathbf{1} - A^T)e_c \rangle_+
\end{align}
where $J$ is the Jacobian. 

In this notation, $\tilde{F}(x_t, \dot{x}_t)$ is equivalent to equation~\ref{eq:penalty} by the chain rule.  Because we want to combine geometric with classification priors, we propose to combine our metric with a given Riemannian metric $\|\cdot\|_{g}$ to give the final penalty, 
\begin{align}
    F(x, v) = \|v\|_{g(x)}\left(1 + \lambda \tilde{F}\left(x,\frac{v}{\|v\|}\right)\right)
\end{align}
where $\lambda \in \mathbb{R}_+$.

This formulation balances the two priors by treating the classification prior as a scaling factor on the geometric prior.  As we see below, as long as $g$ is a conformal Riemannian metric, this defines a valid Finsler metric.
\begin{proposition}
    Given that $\|v\|_{g(x)}$ is a non-trivial conformal Riemannian metric, $F$ defines a non-degenerate Finsler metric.
\end{proposition}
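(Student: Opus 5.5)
We verify the three axioms of an asymmetric norm for $F(x,\cdot)$ at a fixed point $x$, with $F(x,0)=0$ by convention. Write the conformal metric as $\|v\|_{g(x)} = \phi(x)\|v\|$, where $\phi(x)>0$ and $\|\cdot\|$ is the Euclidean norm; non-triviality means $\phi(x)\neq 0$. The first step is to use this form to simplify $F$. Each summand of $\tilde F(x,\cdot)$ has the form $\langle v, w_c\rangle_+$ with $w_c = Jf(x)^T(\mathbf{1}-A^T)e_c$, and $(\cdot)_+$ is positively homogeneous. Hence $\tilde F(x,\cdot)$ is positively homogeneous of degree one, and for $v\neq 0$
\begin{align*}
F(x,v) = \phi(x)\|v\|\Bigl(1+\lambda\tilde F\bigl(x,\tfrac{v}{\|v\|}\bigr)\Bigr) = \phi(x)\Bigl(\|v\| + \lambda \sum_{c\in C} f_c(x)\langle v, w_c\rangle_+\Bigr).
\end{align*}
This identity is the key structural step. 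Conformality is exactly what makes the normalization $v/\|v\|$ inside $\tilde F$, which uses the Euclidean norm, cancel against $\|v\|_{g(x)}$. For a general Riemannian $g$ we would be left with the factor $\|v\|_{g(x)}/\|v\|$, which varies with direction and would break convexity.

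Positive homogeneity now follows immediately from the displayed form, since every term is positively homogeneous of degree one in $v$; the case $\alpha=0$ is handled by $F(x,0)=0$. The triangle inequality follows termwise. The Euclidean norm is subadditive. Each map $v\mapsto\langle v,w_c\rangle_+$ is subadditive because $(a+b)_+\le a_+ + b_+$. The weights $\lambda f_c(x)\ge 0$ because $f$ takes values in the simplex and $\lambda\ge 0$, and the outer factor $\phi(x)$ is positive. A nonnegative combination of subadditive functions is subadditive, so $F(x,u+v)\le F(x,u)+F(x,v)$.

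For non-negativity and non-degeneracy, note that every extra term is nonnegative, so $F(x,v)\ge \phi(x)\|v\|\ge 0$. This lower bound gives $F(x,v)=0$ only when $\|v\|=0$, that is $v=0$, and conversely $F(x,0)=0$. This is where non-triviality of the conformal factor, $\phi(x)>0$, is used.

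The main obstacle is making the reduction in the first step rigorous. This means stating precisely which notion of "conformal" is intended and confirming that the norm inside $\tilde F$ is Euclidean, so that the cancellation holds. It also means handling $v=0$ separately, since $F$ is a priori undefined there.
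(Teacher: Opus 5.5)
Your proof is correct and follows the same route as the paper. Both use the conformal reduction $F(x,v)=\|v\|_{g(x)}+\lambda G(x)\tilde F(x,v)$, obtained from positive homogeneity of $\tilde F$, and then observe that adding nonnegative multiples of $v\mapsto\langle v,w_c\rangle_+$ to a nondegenerate norm preserves the asymmetric-norm axioms. The only difference is in packaging: the paper writes each ReLU term as the support function $\max_{w\in C(x)}\langle v,w\rangle$ of the segment from $0$ to a multiple of $w_c$ and appeals to closure under addition, while you check subadditivity directly via $(a+b)_+\le a_+ + b_+$, which also makes explicit that these terms are only sublinear and that non-degeneracy comes entirely from $F(x,v)\ge G(x)\|v\|$.
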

\begin{proof}
    By definition, we may write $\|v\|_{g(x)} = \|v\|G(x)$ for some strictly positive scalar map $G(x)$, and therefore the Finlser metric reduces to 
    \begin{align}
        F(x, v) = \|v\|_{g(x)} + \lambda G(x) \tilde{F}\left(x,v\right)
    \end{align}

    The space of Finsler norms is closed under addition and a non-degenerate Riemannian norm is strictly convex and hence a non-degenerate Finsler norm.  So it remains to show that $\lambda G(x)f_c(x_t) \langle v, Jf(x)^T(\mathbf{1} - A^T)e_c \rangle_+$ is a legitimate Finsler norm.  By the properties of the ReLU, this may be more succinctly written as $\max_{w \in C(x)} \langle v, w \rangle$ where $C(x)$ is the line segment from $0$ to $\lambda g(x)\langle f(x_t), e_c \rangle Jf(x)^T(\mathbf{1} - A^T)e_c$.  Finally, the dual norm of a convex set containing zero is automatically a Finsler norm~\citep{dahl2006brief}.
\end{proof}

\subsection{Parameterizing Finsler Geometry}

Given a Finsler metric $F$, we aim to efficiently approximate the distance $d_F(x,y)$ in order to enable practical OT along the metric.  This relates to the application of Finsler geometry-based multi-dimensional scaling, first explored in~\citet{dages2025finsler}, where pairwise distances are given.  We don't have oracle access to the distances between points, as our model is attempting to learn the geodesics and calculating the distances would require accurately simulating the integral over paths.  Instead, we derive a different loss to learn embeddings that preserve the Finsler distance, similar to a pullback metric for Riemannian geometry.

We consider a latent space of dimension $l$ and learnable maps $\phi, \psi: \mathcal{M} \rightarrow \mathbb{R}^l$ and a learnable vector $\beta \in \mathbb{R}^l$ and propose to approximate the distances as $d_F(x,y) \approx \|\phi(x) - \phi(y)\| + \langle \psi(x) - \psi(y), \beta\rangle_+$ where the second term introduces asymmetry in the distance.  Let $\gamma$ be the minimizing geodesic between $x$ and $y$, and let $x_t = \gamma(t)$.  Then if we consider $d_F(x_0, x_t)$ using the geodesic formulation, our approximation is of the form
\begin{align*}
    \int_0^t F(x_s, \dot{x}_s) ds &\approx \|\phi(x_0) - \phi(x_t)\| + \langle \psi(x_0) - \psi(x_t), \beta\rangle_+
\end{align*}

Applying $\lim_{t \rightarrow 0} \frac{1}{t}$ on both sides yields the first order Taylor approximation
\begin{align*}
    F(x_0, \dot{x}_0) &\approx \|J\phi(x_0)\dot{x}_0\| + \langle -J\psi(x_0)\dot{x}_0, \beta\rangle_+
\end{align*}

We next provide theoretical evidence of the soundness of our formulation. We first prove that our construction induces a local Finsler structure so we have valid length functional and path distance.

\begin{theorem}[Local Finsler structure]\label{thm:weak-finsler}
Let $\mathcal{M}\subset\mathbb{R}^n$ be a compact $C^1$ manifold, $\phi,\psi\in C^{1}(\mathcal{M};\mathbb{R}^{\ell})$, and $\beta\in\mathbb{R}^{\ell}$.
Define, for $(x,v)\in \mathcal{M}\times\mathbb{R}^n$,
\[
\hat{F}(x,v)\;:=\;\|J\phi(x)\,v\|_{2}\;+\;\big\langle -\,J\psi(x)\,v,\;\beta\big\rangle_{+},
\]
on the tangent bundle $T \mathcal{M}$. Fix $x_0\in \mathcal{M}$ and assume \emph{local non-degeneracy} of the symmetric part at $x_0$:
\begin{equation}\label{eq:nondeg}
\exists\,m_0>0\ \text{s.t.}\ \|J\phi(x_0)v\|_2\ \ge\ m_0\|v\|_2
\end{equation}
The following holds:
\begin{enumerate}[label=(\roman*)]
\item $\hat{F}(x, .)$ is an asymmetric norm. More specifically, the map $v\mapsto \hat{F}(x,v)$ is convex and positively $1$-homogeneous:
\[
\hat{F}(x,\alpha v)=\alpha \hat{F}(x,v)\quad\forall \alpha\ge 0,
\]
and $\hat{F}(x,v)>0$ for all $v\neq 0$.
\item We have lower bounds for the norm, i.e., $\exists m > 0$ such that:
\[
\hat{F}(x,v) \ge m\|v\|_2\, \quad \forall x \in \mathcal{M}, \forall v \in T\mathcal{M}
\]
\end{enumerate}
Consequently, $\hat{F}$ defines a Finsler structure on $\mathcal{M}$.
\end{theorem}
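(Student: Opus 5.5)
The plan is to treat the two summands of $\hat{F}$ separately: the symmetric part $v\mapsto\|J\phi(x)v\|_2$ and the asymmetric part $v\mapsto\langle -J\psi(x)v,\beta\rangle_+$. Each is convex and positively $1$-homogeneous for every fixed $x$, and positivity and the lower bound will come from the symmetric part alone.

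\textbf{Part (i).} For fixed $x$, the map $v\mapsto J\phi(x)v$ is linear, so $v\mapsto\|J\phi(x)v\|_2$ is a seminorm. It is therefore convex, satisfies the triangle inequality, and is absolutely homogeneous.

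For the second term, write $\langle -J\psi(x)v,\beta\rangle=\langle v,w(x)\rangle$ with $w(x):=-J\psi(x)^T\beta$. Then $v\mapsto\langle v,w(x)\rangle_+=\max\{0,\langle v,w(x)\rangle\}$ is a maximum of two linear functionals. Hence it is convex and subadditive, and it is positively homogeneous because $(\alpha s)_+=\alpha s_+$ for $\alpha\ge 0$. This is the same support-function argument used in the proof of the preceding Proposition.

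A sum of convex, positively $1$-homogeneous, nonnegative functions has the same three properties, so $\hat{F}(x,\cdot)$ satisfies the triangle inequality and positive homogeneity. For non-degeneracy, $\hat{F}(x,v)\ge\|J\phi(x)v\|_2$, since the ReLU term is nonnegative. So it suffices that $J\phi(x)$ is injective on $T_x\mathcal{M}$, which is exactly what part (ii) will supply.

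\textbf{Part (ii).} First I would propagate the hypothesis \eqref{eq:nondeg} from $x_0$ to a neighborhood. Since $\phi\in C^1$, the map $x\mapsto J\phi(x)$ is continuous. On the unit sphere $S=\{\|v\|_2=1\}$, the function $(x,v)\mapsto\|J\phi(x)v\|_2$ is jointly continuous. From
\[
\|J\phi(x)v\|_2\ \ge\ \|J\phi(x_0)v\|_2-\|J\phi(x)-J\phi(x_0)\|_{op}\|v\|_2,
\]
there is a neighborhood $U\ni x_0$ on which $\|J\phi(x)-J\phi(x_0)\|_{op}\le m_0/2$. This gives $\hat{F}(x,v)\ge(m_0/2)\|v\|_2$ for all $x\in U$, which proves (ii) on $U$ with $m=m_0/2$.

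To obtain the bound uniformly over all of $\mathcal{M}$, as the statement is written, I would use compactness. The function $\mu(x):=\min_{v\in T_x\mathcal{M},\,\|v\|_2=1}\|J\phi(x)v\|_2$ is continuous (lower semicontinuity suffices), because $J\phi$ is continuous and the unit sphere bundle of the $C^1$ manifold $\mathcal{M}$ is compact. A continuous positive function on a compact set attains a positive minimum $m$. Positive homogeneity then extends $\hat{F}(x,v)\ge m\|v\|_2$ from unit vectors to all $v$.

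\textbf{Main obstacle.} The hardest step is that this global argument needs $\mu(x)>0$ at every point of $\mathcal{M}$, whereas \eqref{eq:nondeg} is assumed only at $x_0$. A pointwise hypothesis at a single point cannot rule out $J\phi$ degenerating somewhere else. I would therefore state the result precisely as follows.
\begin{itemize}
\item Under \eqref{eq:nondeg} at $x_0$ alone, we get a local Finsler structure on a neighborhood $U$ of $x_0$, with $m=m_0/2$. This matches the theorem's title, ``Local Finsler structure''.
\item If \eqref{eq:nondeg} holds at every $x\in\mathcal{M}$, the compactness argument gives a single uniform $m>0$ on all of $\mathcal{M}$.
\end{itemize}

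\textbf{Conclusion.} Once (ii) holds on the relevant set, (i) gives non-degeneracy there. Continuity of $\hat{F}$ in $(x,v)$ follows from $\phi,\psi\in C^1$ and the continuity of the ReLU, so the length functional $\int\hat{F}(\gamma,\dot\gamma)$ is well defined for piecewise $C^1$ curves. The lower bound also gives $d_{\hat F}(x,y)\ge m\,d_{\mathrm{path}}(x,y)$, so the induced distance separates points. Together these show that $\hat{F}$ is a Finsler structure in the sense of the Definition above.
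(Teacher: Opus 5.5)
Your proposal is correct. Where the statement is actually provable, it follows the paper's route. Part (i) splits $\hat F(x,\cdot)$ into the seminorm $v\mapsto\|J\phi(x)v\|_2$ plus the ReLU of a linear functional. Part (ii) uses continuity of $x\mapsto J\phi(x)$ together with a perturbation estimate. The paper phrases it as $1$-Lipschitzness of $\sigma_{\min}$ in the spectral norm and you use the reverse triangle inequality, but these are the same bound.

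The two arguments diverge in passing from local to global, and there you are right and the paper is not. The paper asserts that continuity gives $\|J\phi(x)-J\phi(x_0)\|_o<\epsilon$ for \emph{all} $x\in\mathcal{M}$; it introduces a radius $r$ that is never used. In (i) it asserts $\|J\phi(x)v\|_2>0$ for arbitrary $x$ without justification. Continuity only delivers the estimate on a neighborhood of $x_0$, which is exactly the obstacle you identified.

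You can strengthen your remark that a hypothesis at one point ``cannot rule out'' degeneration elsewhere into a proof that the statement as written is false. The reason is that your sufficient condition is also necessary. If $J\phi(x)v=0$ for some $0\neq v\in T_x\mathcal{M}$, then $\hat F(x,v)=\langle v,w(x)\rangle_+$ and $\hat F(x,-v)=\langle -v,w(x)\rangle_+$, and one of the two vanishes whatever $\psi,\beta$ are.

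For a concrete counterexample, take $\mathcal{M}=S^1\subset\mathbb{R}^2$, $\phi(x)=(x_1,x_2^2)$ and $x_0=(0,1)$. Then $J\phi(x_0)=\mathrm{diag}(1,2)$, so the hypothesis holds with $m_0=1$ even for all $v\in\mathbb{R}^2$. But at $x=(1,0)$ one has $T_x\mathcal{M}=\mathrm{span}(e_2)$ and $J\phi(x)e_2=0$, so both the positivity claim in (i) and the bound in (ii) fail there.

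Your two corrected versions are the right statements: a local Finsler structure on $U$ with $m=m_0/2$, and a global $m$ when non-degeneracy holds at every point. For the global version you need not argue continuity of $\mu$. The jointly continuous function $(x,v)\mapsto\|J\phi(x)v\|_2$ is positive on the compact unit sphere bundle, so it attains a positive minimum. Alternatively, cover $\mathcal{M}$ by finitely many of your local neighborhoods and take the smallest of their constants.
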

\begin{proof}
\textbf{Proof for (i).} In the first term $\|.\|_{2}$ is an Euclidean norm which is convex and positively 1-homogeneous. Hence, for $\alpha > 0$
\[
\|J\phi(x)\, (\alpha v)\|_{2} = \alpha \|J\phi(x)\ v\|_{2} > 0
\]
In the second term we have
\[
\big\langle \,-J\psi(x)\,v,\;\beta\big\rangle = - (J\psi(x)\,v)^T \beta = -v^T J\psi(x)^T \beta
\]
The map $v \mapsto -v^T J\psi(x)^T \beta$ is linear with respect to $v$. Hence for $\alpha > 0$ we have
\[
\big\langle \,-J\psi(x)\,(\alpha v),\;\beta\big\rangle = \alpha \big\langle \,-J\psi(x)\, v,\;\beta\big\rangle
\]
Additionally, ReLU operator is convex and positively 1-homogeneous on $\mathbb{R}$ for all $\alpha > 0$. Consequently, our second term is a composition of a convex function and a linear map. Hence the map $v \mapsto \big\langle \,-J\psi(x)\,v,\;\beta\big\rangle_+$ is convex and positive 1-homogeneous. Summing up the two terms, for every fixed $x$, the map
\[
v \mapsto \hat{F}(x, v)
\]
is also convex and positively 1-homogeneous. 

\textbf{Proof for (ii):}
Assume local non-degeneracy at $x_0$, we have some $m_0$ that
\[
\sigma_{min}(J\phi(x_0)) > m_0
\]
where $\sigma_{min}(.)$ is minimal singular value. Since $\phi \in C^1$, so the mapping $x \mapsto J\phi(x)$ is continuous. Hence  $\exists r > 0$ and a margin $\epsilon \in (0, m_0)$ such that for all $x \in \mathcal{M}$ and $v \in T\mathcal{M}$
\[
||J\phi(x) - J\phi(x_0)||_o < \epsilon
\]
where $||.||_o$ here represent spectral norm. Additionally, since singular values are 1-Lipschitz w.r.t. spectral norm, we have:
\begin{align*}
    \sigma_{min}(J\phi(x)) &\geq \sigma_{min}(J\phi(x_0)) - ||J\phi(x) - J\phi(x_0)||_o\\
    &> m_0 - \epsilon
\end{align*}

So set $m = m_0 - \epsilon$ we have
\[
\sigma_{min}(J\phi(x)) \geq m, \quad \forall x \in \mathcal{M}
\]

Hence we have the local lower bound
\[
||J\phi(x)v||_2 \geq \sigma_{min}(J\phi(x))||v||_2 \geq m||v||_2
\]
Since $\langle -J\psi(x)v, \beta\rangle_+ > 0$, we have 
\[
\hat{F}(x, v) \geq m||v||_2
\]

\end{proof}

We furthermore show that, assuming the data follows the same topology as the lineage tree and we have an ideal classifier, optimizing our loss function recovers the geodesic and produces non-contradicting pass.

\begin{theorem}[Geodesic recovery and non-contradicting path]
    Let $f$ be a differentiable classifier.  Consider a differentiable trajectory $x_t$ such that for any time $t$, $f(x_t)$ is always non-zero on at most two classes which may vary with $t$.  Furthermore, when supported on two classes $i$ and $j$ we assume there is an edge in the tree from $i$ to $j$ and $\langle \partial_t f(x_t), e_i \rangle < 0$.

    Then in the limit as $\lambda \rightarrow \infty$, $x_t$ is a geodesic in the Finsler metric.
\end{theorem}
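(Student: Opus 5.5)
The plan is to show that, under the stated hypotheses, the directional penalty $\tilde{F}$ vanishes identically along $x_t$. I would then argue that, as $\lambda\to\infty$, the Finsler energy effectively restricts admissible competitors to curves with zero penalty. On that restricted class $F$ coincides with the conformal Riemannian norm $\|\cdot\|_{g}$, so $x_t$ should be interpreted as a geodesic among non-contradicting curves; the statement implicitly requires this, or that $x_t$ is already $g$-minimizing. Concretely, I will prove that $F(x_t,\dot x_t)=\|\dot x_t\|_{g(x_t)}$ for every $\lambda$, and that any curve with positive penalty has energy tending to $+\infty$ as $\lambda\to\infty$. Together these show that the limiting energy functional equals the Riemannian energy on the zero-penalty class and $+\infty$ off it.

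\textbf{Step 1: pointwise vanishing of the penalty.} By the chain rule,
\begin{align*}
\tilde{F}(x_t,\dot x_t)=\sum_{c\in C} f_c(x_t)\,\langle \partial_t f(x_t),(\mathbf{1}-A^T)e_c\rangle_+ .
\end{align*}
Note that $\langle \partial_t f,(\mathbf{1}-A^T)e_c\rangle=\sum_{k:A_{ck}=0}\partial_t f_k$. I split into cases by the support of $f(x_t)$.

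\emph{One class $i$ on an open interval.} Then $f(x_t)=e_i$ is constant there, so $\partial_t f=0$ and the penalty is $0$.

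\emph{Two classes $i,j$ with $A_{ij}=1$.} Only the terms $c=i$ and $c=j$ carry weight. Because $f$ stays on the simplex, $\partial_t f_j=-\partial_t f_i$, and all other $\partial_t f_k$ vanish on the open set where the support is $\{i,j\}$. For $c=i$, the illegal set excludes $i$ and $j$, since $A_{ii}=A_{ij}=1$, so the inner product is $0$. For $c=j$, the illegal set contains at most $i$, because $A_{jj}=1$. The inner product is therefore either $0$ or $\partial_t f_i<0$, and the ReLU gives $0$ in both cases.

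\emph{Boundary times.} These are isolated points where the support changes. I handle them either by continuity of $t\mapsto \tilde F(x_t,\dot x_t)$ or by noting they form a measure-zero set that does not affect the integral. A care point is that the support of $f(x_t)$ near such a time need not be locally constant. I would use that $f_k(x_t)=0$ on an interval forces $\partial_t f_k=0$ there, and that at a single time where $f_k$ vanishes, $f_k\ge0$ forces $\partial_t f_k=0$ if $f_k$ is differentiable, since it is a minimum. This makes the argument work pointwise without needing intervals.

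\textbf{Step 2: the limit.} Combining Step 1 with the identity $F=\|v\|_g+\lambda G\tilde F$ from the earlier Proposition gives $E_F(x_t)=E_g(x_t)$ independently of $\lambda$. For any competitor $\gamma$ with the same endpoints and $\int \tilde F(\gamma,\dot\gamma)>0$, we have $E_F(\gamma)\ge \tfrac{\lambda^2}{2}\big(\int G\tilde F\big)^2\to\infty$ by Cauchy--Schwarz, using $G>0$. Hence in the limit the minimization is over penalty-free curves, on which $F=\|\cdot\|_g$.

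\textbf{Main obstacle.} I expect the main obstacle to be Step 2's interpretation. Vanishing of the penalty alone does not make $x_t$ a geodesic; it must also be critical or minimizing for $E_g$ within the penalty-free class. I would state this as a $\Gamma$-convergence-type claim: minimizers of $E_F$ converge to minimizers of the constrained Riemannian problem, and $x_t$ achieves the constrained infimum provided it is $g$-geodesic among non-contradicting curves. That proviso is how I would read ``recovers the geodesic'' in the statement.
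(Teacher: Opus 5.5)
Your Step 1 is the paper's proof, done more carefully. The paper's reasons for $\partial_t f$ vanishing off the support (the ``extremum'' remark and the ``otherwise three classes'' remark) amount to your observation that a coordinate of $f(x_t)$ sitting at $0$ or $1$ has zero time derivative. In the two-class case the surviving term is really $f_j(x_t)(1-A_{ji})\langle \partial_t f(x_t), e_i\rangle_+$. The paper writes $f_i$ and tacitly takes $A_{ji}=0$, and your ``at most $i$'' case split handles both points correctly.

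The divergence is in Step 2, i.e.\ in what ``geodesic in the limit $\lambda\to\infty$'' means. The paper takes the limit at the level of the metric: the Riemannian part is $O(1)$ while the lineage part is $O(\lambda)$, so it discards the former. Equivalently, it looks at $E_F/\lambda^2 \to \tfrac12\int_0^1 (G\tilde F)^2$. This limiting energy is nonnegative and vanishes on $x_t$ by Step 1, so $x_t$ is a global minimizer, hence a geodesic, with no extra hypothesis. That one-line closing is all the theorem as stated requires, and your proposal does not use it. You instead keep the $O(1)$ term and pass to the constrained ($\Gamma$-limit) problem, which is exactly why you need the proviso that $x_t$ is $E_g$-minimal among non-contradicting curves.

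Your reading is the more meaningful one, and your objection is sound. Under it the statement fails without the proviso. A curve winding gratuitously through a region where $f\equiv e_i$ meets every hypothesis, yet there $F=\|\cdot\|_g$, so it is in general not an $F$-geodesic for any $\lambda$. It also shows the paper's conclusion is weak: the limiting functional is degenerate (not a Finsler norm), and every non-contradicting curve is a zero-energy ``geodesic'' of it. So the two routes share the key computation; the paper's gets the literal statement cheaply, and yours gets a stronger but conditional one.
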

\begin{proof}
    In this limit, the Riemann contribution to the Finsler metric becomes negligible, so we can focus on the asymmetric contribution.  

    Consider any time $t$, and suppose $f$ is supported on one class, i.e. $f(x_t) = e_i$.  Since each entry of $f$ is only supported on $[0,1]$, this is an extremum for each entry of $f$ and therefore $\partial_t f(x_t) = 0$ and $F(x_t, \dot{x}_t) = 0$.
    
    Alternatively, suppose at time $t$ the support is on two classes $i$ and $j$.  We can calculate,
    \begin{align*}
        F(x_t, \dot{x}_t) &= \sum_{c \in C} f_c(x_t) \langle \partial_t f(x_t),(\mathbf{1} - A^T)e_c \rangle_+ \\
        &= \sum_{c \in \{i, j\}} f_c(x_t) \langle \partial_t f(x_t),(\mathbf{1} - A^T)e_c \rangle_+
    \end{align*}

    Note that $\partial_t f(x_t)$ cannot have any support outside of indices $i$ and $j$, as there would otherwise be a point with support on three classes.  If $M = \mathbf{1} - A^T$, note that $M_{ii} = M_{ji} = M_{jj} = 0$ and $M_{ij} = 1$.  Hence, 
    \begin{align}
        F(x_t, \dot{x}_t) &= f_i(x_t) \langle \partial_t f(x_t),e_i \rangle_+\\
        &= 0
    \end{align}

    by the assumption that $\langle \partial_t f(x_t),e_i \rangle < 0$.  Hence, in the limit $x_t$ has zero energy along its trajectory and must be a geodesic.
\end{proof}

Hence, our Finsler metric serves as a sensible incorporation of the lineage prior.  

\subsection{Learning Finsler Geometry}

In order to learn the embedding and geodesics simultaneously, we define a loss to better approximate the true metric with the distance embedding and minimize the energy.  Namely, we can consider any distribution $\mu$ on the tangent bundle and the associated embedding loss
\begin{align*}
L_{\mathrm{emb}}&(\phi,\psi,\beta,\mu)
=\\ & \mathbb{E}_{x,v \sim \mu}\Big[
\big|\|J\phi(x)v\| + \langle -J\psi(x)v, \beta\rangle_+ - F(x,v)\big|
\Big].
\end{align*}

Given a network $\eta$ that parameterizes geodesics and a coupling between timepoints $\pi$, we have an energy loss:
\begin{align*}
    L_{geo}(\eta, \pi) &= E_{(x_0, x_1) \sim \pi}\left[F(x_t, \dot{x}_t)^2 \right] \\
    x_t &:= (1-t)x_0 + tx_1 + t(1-t)\eta(x_0, x_1, t)
\end{align*}

To make these losses concrete, we must choose sensible sampling distributions.  As in previous work~\citep{tong2024improving}, we choose $\pi$ to be the OT coupling between two observed timepoints, where we measure pairwise distances using the learned distance $d_{\hat{F}}(x,y) = \|\phi(x) - \phi(y)\| + \langle \psi(x) - \psi(y), \beta\rangle_+$.

And in order to learn an embedding that covers the relevant space, we define $\mu$ as the induced distribution by sampling $(x_0, x_1) \sim \pi$ and returning the element of the tangent bundle $(x_t, \dot{x}_t)$ for $t \sim U[0,1]$.  This training scheme is summarized in Algorithm~\ref{alg:training}.

\begin{algorithm}[t]
\raggedright
\caption{Finsler Metric Training}
\label{alg:training}
\begin{algorithmic}[1]
\Require Datasets $\mathcal{D}^0 = \{(x_i^0, y_i^0)\}$, $\mathcal{D}^1 = \{(x_i^1, y_i^1)\}$, pretrained Riemannian metric $\|\cdot\|_g$
\Ensure Finsler networks $(\phi, \psi, \beta)$, geodesic network $\eta$

\While{Training}
    \State Sample mini-batch $\mathcal{B} \subset \mathcal{D}^0 \cup \mathcal{D}^1$
    \State Compute $\mathcal{L}(f) = E_{(x,y)\sim B} CE(f(x), y)$
    \State OptimizerStep($f$, $\mathcal{L}$)
\EndWhile
\While{Training}
    \State Sample batches $\mathcal{B}^0 \subset \mathcal{D}^0$ and $\mathcal{B}^1 \subset \mathcal{D}^1$
    \State Compute $\pi = OT(\mathcal{B}^0, \mathcal{B}^1, d_{\hat{F}})$
    \State Compute $\mu = (x_t, \dot{x}_t)_\#(\pi \times U[0,1])$
    \State Compute $\mathcal{L} = L_{\mathrm{emb}}(\phi,\psi,\beta,\mu) + L_{geo}(\eta, \pi)$
    \State OptimizerStep($\phi$, $\psi$, $\beta$, $\eta$, $\mathcal{L}$)
\EndWhile

\State \Return $\phi$, $\psi$, $\beta$, $\eta$
\end{algorithmic}
\end{algorithm}

\section{Results}

We evaluate the proposed lineage-informed Finsler geometry in  three data settings: synthetic, zebrafish embryogenesis, and mouse organogenesis.  In all cases, we train on a single pair of timepoints, and interpolate held-out times. We measure accuracy using Wasserstein distance $W_1$.  We prioritize methods that support learning gedoesics under a geometry-informed metric to study how the inclusion of a discrete Finsler metric can improve performance.

\subsection{Synthetic Data}

We first validate the method on 2-dimensional synthetic data with simple lineage trees.  These instances demonstrate how the geodesics change when using Finsler geometry with a trained classifier, see Figure~\ref{fig:synthetic}.  We consider a setting where the geometric prior is completely uninformative as for the observed time points, the data exhibits vertical symmetry.  It is instead necessary to have the discrete prior and cluster labels define a metric that prefers the correct data cluster and breaks the symmetry. We additionally increase the number of ambient dimensions to mimic a high-dimensional setting to confirm there is no geometric collapse in our formulation. As shown in Table \ref{tab::syn} and Figure~\ref{fig:synthetic}, adding the Finsler geometry correctly curves the trajectory to follow the geometry of the ground truth lineage tree($0 \rightarrow 1 \rightarrow 4$) while CFM trajectory would follow the $0 \rightarrow 2 \rightarrow 4$ straight-line.  We observe a similar phenomenon in the 50 dimensional regime.  The improvement in Wasserstein distance is still substantial, but smaller because each data cloud is based on a high dimensional isotropic Gaussian and the curse of dimensionality causes Wasserstein distances to concentrate more slowly.  Nevertheless, we see a similar phenomenon where the Finsler guided trajectories have the correct lineage even in higher dimension.

\begin{figure*}
     \centering
     \begin{subfigure}[b]{0.3\textwidth}
         \centering
         \includegraphics[width=\textwidth]{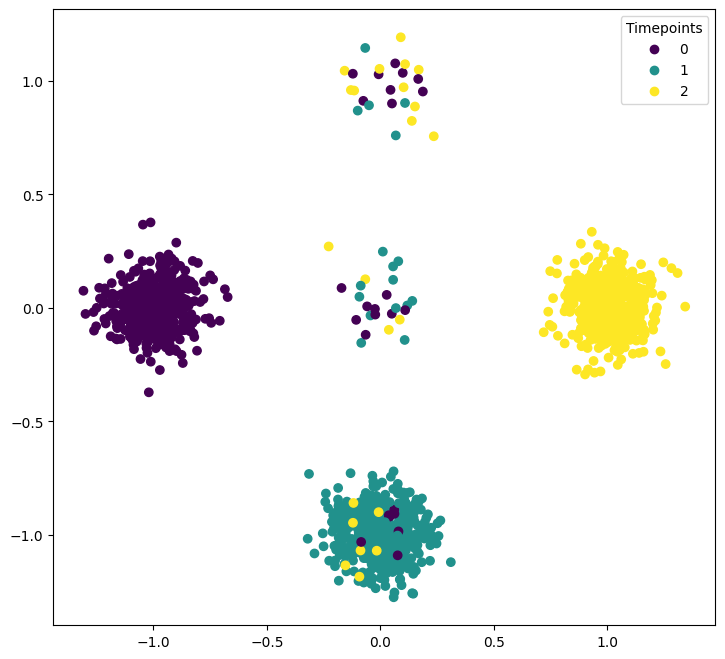}
         \caption{Data labeled by time}
         \label{fig:syn_time}
     \end{subfigure}
     \begin{subfigure}[b]{0.3\textwidth}
         \centering
         \includegraphics[width=\textwidth]{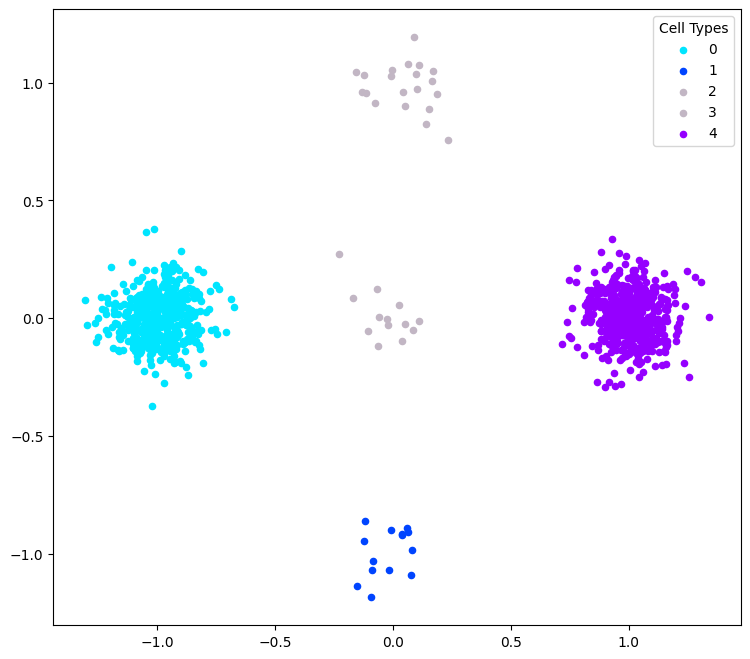}
         \caption{Observed data labeled by cell type}
         \label{fig:syn_cell}
     \end{subfigure}
     \begin{subfigure}[b]{0.3\textwidth}
         \centering
         \includegraphics[width=\textwidth]{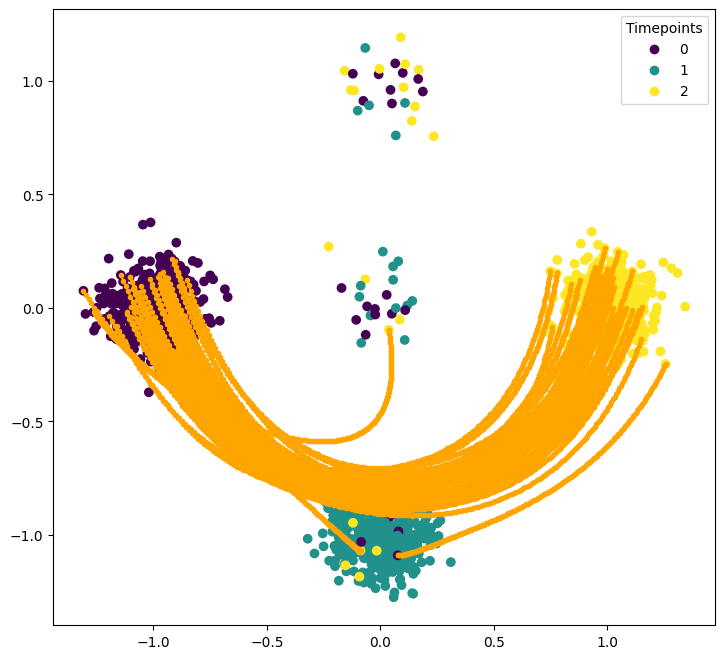}
         \caption{Sample of 50 geodesic trajectories}
         \label{fig:syn_traj}
     \end{subfigure}
        \caption{The Finsler metric trained on synthetic 2d data.  The true lineage goes from class $0 \rightarrow 1 \rightarrow 4$, but the model is only trained on $t \in \{0, 2\}$ and therefore must rely on classification guidance to bias the trajectories through the dark blue cell type.}
        \label{fig:synthetic}
\end{figure*}

\begin{table}[h]
\centering
\resizebox{\linewidth}{!}{%
\begin{tabular}{c c c}
\hline
 & Synthetic($n$=2) & Synthetic($n$=50) \\
\hline
CFM(\cite{tong2024improving}) & $0.95 \pm 0.00$ & $1.21 \pm 0.00$ \\
CFM+Finsler                  & $\bm{0.43 \pm 0.02}$ & $\bm{0.97 \pm 0.03}$ \\
\hline
\end{tabular}%
}
\caption{Wasserstein-1 distance averaged over marginals on left-out time points for the synthetic dataset. $n$ represents feature dimension.}
\label{tab::syn}
\end{table}

\subsection{Zebrafish Embryo Development}
Considering the diverse lineages of biological systems, we choose to subset cells in the zebrafish atlas ~\citet{saunders2023embryo} based on tissue origin, to clearly examine the expected lineage relationship between cell types. This atlas densely profiles embryogenesis through early larval stages. We first restrict our attention to brain tissue (central nervous system, CNS), the tissue with the most dynamic lineage tree available, training trajectories starting at 24hpf (hours post-fertilization) and ending at 72hpf. In this system, neurogenesis proceeds from proliferative neural progenitors toward progressively fate-restricted neuronal and glial populations, and the curated lineage graph provided by ~\citet{saunders2023embryo} encodes these directed developmental relationships based on integration of temporal progression, transcriptional programs, and prior embryological knowledge. Importantly, these edges reflect biologically supported differentiation trajectories rather than purely transcriptomic similarity.

From a developmental biology perspective, the CNS provides a stringent test case: neural progenitors at 24 hpf are transcriptionally plastic and multipotent, whereas by 72 hpf many lineages have committed to terminal or near-terminal neuronal identities. Thus, any interpolated trajectory between these stages should respect the hierarchical restriction of fate potential and avoid biologically implausible “backward” transitions (e.g., dedifferentiation).

\begin{table}[h]
\centering
\resizebox{\linewidth}{!}{%
\begin{tabular}{c c c}
\hline
 & Zebrafish-CNS & Zebrafish-PA \\
\hline
LineageOT(\cite{forrow2021lineageot}) & $15.01 \pm 0.00$ & $\bm{11.68 \pm 0.00}$\\
Moslin(\cite{lange2024mapping}) & $14.67 \pm 0.00$ & $12.15 \pm 0.00$\\

$\text{SF}^2 \text{M}$(\cite{tong2024simulation}) & $15.91 \pm 0.11$ & $13.42 \pm 0.10$\\
CFM(\cite{tong2024improving})                    & $14.69 \pm 0.02$ & $12.16 \pm 0.02$\\
CFM+Finsler                                      & $\bm{14.57 \pm 0.02}$ & $11.99 \pm 0.04$\\
\hline
MFM(\cite{kapusniak2024metric})                  & $13.85 \pm 0.03$ & $\bm{11.42 \pm 0.03}$ \\
MFM + Finsler                                    & $\bm{13.78 \pm 0.03}$ & $\bm{11.42 \pm 0.02}$ \\
\hline
\end{tabular}%
}
\caption{Wasserstein-1 distance averaged over marginals on left-out time points for 100-PC representation of the zebrafish dataset.}
\label{tab:zebrafish}
\end{table}

We then focus on the Pharyngeal Arch (PA) tissue where the lineage tree has a unique structure in~\citet{saunders2023embryo} reflecting cranial neural crest cells and mesodermal derivatives that give rise to skeletal, connective, and muscle cell types. We trained trajectories with the same starting and ending timepoints as in brain tissues. For both datasets, we benchmark our method against other models with learned metrics and geodesics by interpolating two held-out time points as measured in $W_1$ distance. As shown in Table~\ref{tab:zebrafish}, for both CFM and MFM, adding Finsler geometry of the lineage tree improved interpolation performance in both datasets. Additionally, by visualizing the learned trajectories via a river plot, we observe that the Finsler metric compelled learned trajectories to follow the lineage tree (see Figure~\ref{fig:zebrafish-two-panel}). For example, sampled trajectories starting from neuronal progenitors are being curved to descendant neuronal subtypes that are reachable under the curated developmental graph, with reduced probability mass assigned to unrelated branches (e.g., non-neural derivatives). This behavior is aligned with the process of CNS development, where progenitor populations undergo  transcriptional specialization driven by regulatory cascades (e.g., proneural factors followed by subtype-specific transcription factors), and intermediate states form a continuum rather than abrupt jumps. The Finsler-constrained model captures branch-consistent flows that traverse plausible intermediate transcriptional states rather than shortcutting across transcriptionally similar but developmentally unrelated clusters.

\captionsetup[subfigure]{font=small}

\begin{figure*}[t]
  \centering

  \begin{subfigure}[t]{0.495\textwidth}
    \centering
    \includegraphics[width=\textwidth,trim=8 8 8 8,clip]{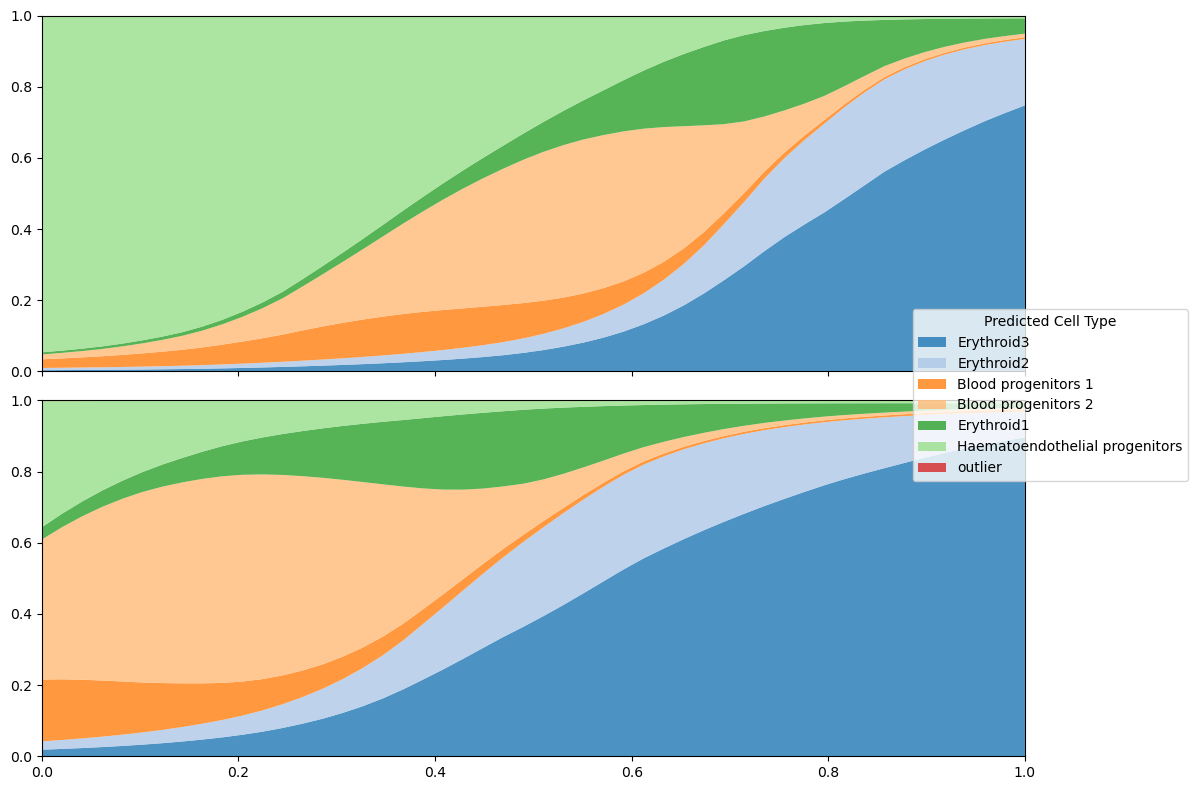}
    \caption{Finsler geometry.}
    \label{fig:mouse-river}
  \end{subfigure}\hfill
  \begin{subfigure}[t]{0.495\textwidth}
    \centering
    \includegraphics[width=\textwidth,trim=8 8 8 8,clip]{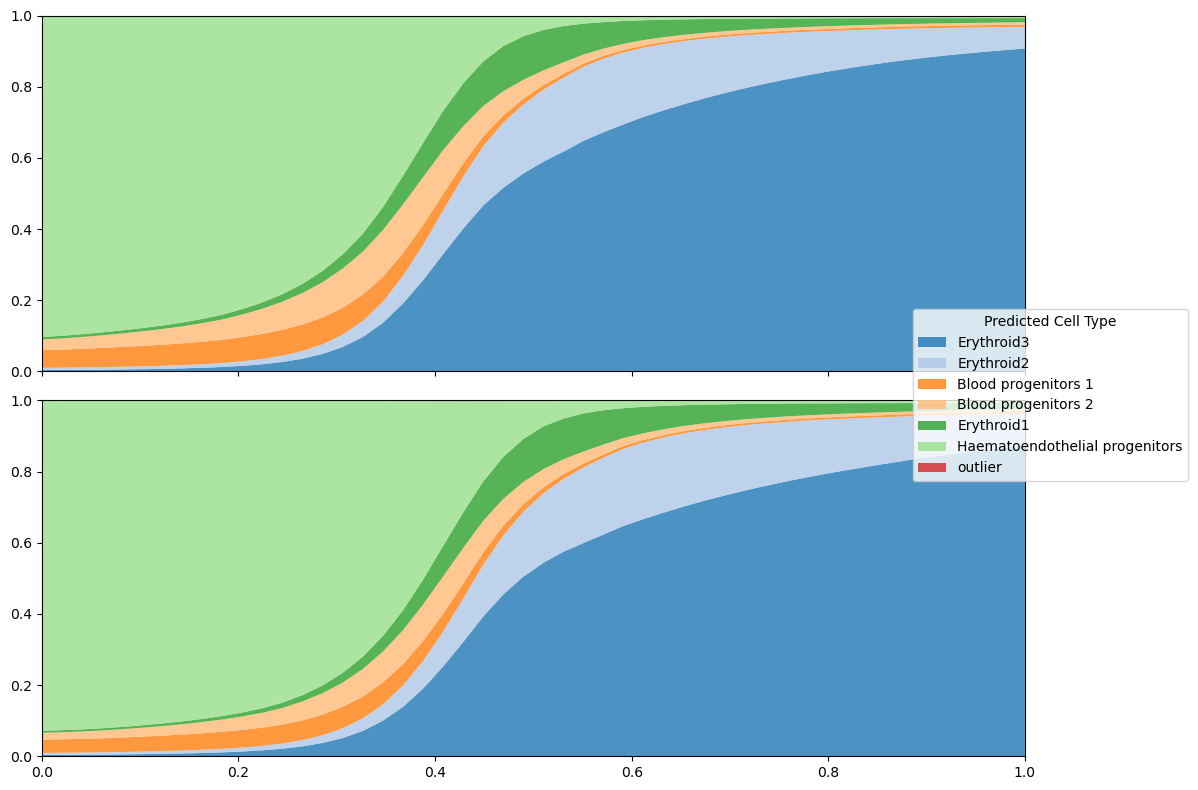}
    \caption{CFM only.}
    \label{fig:mouse-river-cfm}
  \end{subfigure}

  \caption{Cell type predictions along two learned trajectories for the Mouse-Blood dataset. The x-axis is time and the y-axis is cell type probability. Top and bottom are two sampled trajectories.}
  \label{fig:mouse-blood-two-panel}
\end{figure*}

\subsection{Mouse Embryo Organogenesis}

We then shift our focus onto a more complicated model organism and evaluate our model's performance on mouse embryo organogenesis data. The data is measured in embryonic days (E) at 8 time points between E6.5 and E8.5.  This period isolates key states of organogenesis~\citet{pijuan2019single} and is therefore the most applicable for lineage tree priors as there is rapid cell differentiation, but limited sample sizes for specific tissues.

In this setting, we first consider hematopoiesis. In the UMAP we observe a topologically linear trajectory from Hematoendothelial progenitors to the last measured stage of Erythroid cells. In particular, intermediate progenitor states represent transient programs that should be traversed, when interpolating between early progenitors and late erythroid populations.  However, the trajectory is not necessarily linear in the ambient space, and sparsity or noise in the data can make the classification prior a useful supplement to the geometric signal.

\begin{table}[h]
\centering
\resizebox{\linewidth}{!}{%
\begin{tabular}{c c}
\hline
 & Mouse-Blood \\
\hline
LineageOT(\cite{forrow2021lineageot}) & $7.98 \pm 0.00$\\
Moslin(\cite{lange2024mapping}) & $14.67 \pm 0.00$\\
$\text{SF}^2 \text{M}$(\cite{tong2024simulation}) & $9.16 \pm 0.01$\\
CFM(\cite{tong2024improving})                    & $8.07 \pm 0.08$\\
CFM+Finsler                                      & $\bm{7.65 \pm 0.04}$\\
\hline
MFM(\cite{kapusniak2024metric})                  & $7.95 \pm 0.08$\\
MFM + Finsler                                    & $\bm{7.65 \pm 0.06}$\\
\hline
\end{tabular}%
}
\caption{Wasserstein-1 distance averaged over marginals on left-out time points for 100-PC representation of the mouse embryo organogenesis dataset.}
\label{tab:mouse}
\end{table}

For our tissue-specific dataset, we train trajectories starting at 7.0E and ending at 8.5E. The interpolation performance are measured via mean $W_1$ distance over the 5 held-out time points. From Table~\ref{tab:mouse}, we see that adding Finsler geometry again improves both CFM and MFM on both datasets. 
The consistent improvement across model classes suggests that the directed prior is successfully ruling out biologically implausible transport under noisy or sparse sampling rather than redefining the dominant geometry. 
We show that learned trajectories follow the lineage tree structure in Figure~\ref{fig:mouse-river}. For example, trajectories initialized from cells with both hematoendothelial progenitors and blood progenitor 2 states flow to cells with Erythroid 2 and Erythroid 3 states at the end, which matches the structure of the tree and known developmental biology~\citet{pijuan2019single}. Biologically, the Finsler term encourages geodesics to allocate probability mass through fate-restricted intermediates (e.g., blood progenitor programs) before reaching terminal erythroid states, aligning interpolation with the expected ordering of hematopoietic maturation over E7.0–E8.5.
Meanwhile, we also show an example of CFM failing to capture the correct trajectory and lineage on the same dataset in Figure~\ref{fig:mouse-river-cfm}, where the trajectories starting at hematoendothelial progenitors tend to skip the intermediate states and go straight to the endpoint state which is Erythroid 3.

\section{Discussion}

We have introduced a lineage-informed Finsler geometry for trajectory inference that combines a continuous geometric prior
with a discrete, directed biological prior encoded by a lineage adjacency matrix $A$. The key modeling contribution is an asymmetric local norm: the classifier-induced term penalizes instantaneous motion that transfers probability mass into lineage-inconsistent directions, so geodesics become direction-dependent and better reflect developmental transitions. To counter the necessity of oracle geodesics, we also show that Finsler geometry can be learned by learning isometric embeddings optimized for distance preservation and energy minimization.

Empirically, adding the Finsler geometry to existing geodesic based models improves interpolation on both synthetic and real single-cell datasets. On synthetic examples it also resolves geometric ambiguities by selecting lineage-consistent paths, while on zebrafish and mouse tissues it yields consistent gains in held-out $W_1$, indicating that lineage
consistency is a useful inductive bias under noise and sparse sampling. Moreover, our Finsler metric is a differentiable, local adjustment of
path cost, and is therefore compatible with OT-based couplings and end-to-end geodesic learning at scale.

Limitations of the method include dependence on the quality of the lineage prior, reduced benefit when the lineage structure is too dense or already aligned with the ambient geometry, and sensitivity to classifier calibration and embedding/geodesic network quality.

\section{Conclusion}
We introduced a lineage-informed Finsler metric for trajectory inference that unifies continuous geometric priors with
directed, discrete lineage constraints via an asymmetric local norm derived from a classifier and a lineage adjacency
matrix. This construction yields direction-aware geodesics that better reflect developmental dynamics and
integrates naturally with OT-based coupling and flow/trajectory learning objectives. Across synthetic benchmarks and
real single-cell developmental datasets, incorporating the proposed Finsler geometry into CFM/MFM improves held-out
timepoint interpolation as measured by marginal $W_1$, and qualitatively produces trajectories aligned with lineage
structure. Future work includes improving robustness to lineage mis-specification, reducing sensitivity to classifier
calibration, and extending beyond tree priors to more general directed fate graphs and uncertainty-aware priors.



\begin{acknowledgements} 
This work was made possible by support from the MacMillan Family and the MacMillan Center for the Study of the Non-Coding Cancer Genome at the New York Genome Center. A.Z. is the Sijbrandij Foundation Quantitative Biology Fellow of the Damon Runyon Cancer Research Foundation (DRQ26-25). This work was also supported by the NIH NHGRI grant R01HG012875, and grant number 2022-253560 from the Chan Zuckerberg Initiative DAF, an advised fund of Silicon Valley Community Foundation.
\end{acknowledgements}

\bibliography{uai2026-template}

\newpage

\onecolumn

\title{Learning lineage-guided geodesics with Finsler Geometry\\(Supplementary Material)}
\maketitle

\appendix
\section{Experimental Details}

\subsection{Dataset Details}

For the synthetic data, we sample Gaussians with 0.1 standard deviation at 5 locations, with 500 points for the start and end point, and 25 points at each of the 3 intermediate clusters with distinct cell types.

All single cell data follows standard preprocessing in Scanpy~\citep{wolf2018scanpy} of count normalizaiton, applying log transform, and subsetting to the top 2000 highly variable genes, followed by PCA to reduce to dimension 100. 

We consider how we derive lineage trees.  For the Zebrafish-CNS dataset, we use the Zebrafish brain lineage tree derived in Extended Data Fig. 2 of~\citet{saunders2023embryo}.  For Zebrafish-PA, we apply the PAGA algorithm~\citep{wolf2019paga} with threshold 0.2 to the PC reduced data, and prune cell types that are isolated in the tree.  For the Mouse blood dataset we select directly the lineage to be a path from Hematoendothelial progenitors to Erythroid 3 as apparent on the UMAP in Figure 1 in~\citet{pijuan2019single}, and verify with PAGA the same tree, restricting to these cell types.

\subsection{Parameterization Details}

We parameterize all networks with 3 layers of 256 hidden dimension, with the exception of the latent dimension of the distance embeddings $l = 100$, using batchnorm and trained with AdamW~\citep{loshchilov2017decoupled} using learning rate 0.001.  We use batch size 512 for classification and metric training, and 2048 for embedding and geodesic training.  The geodesic network embeds time with a sinusoidal embedding on 32 frequencies.

We use the implementation of MFM~\citep{kapusniak2024metric} with the same networks as our model for consistency.  For $\text{SF}^2\text{M}$, we adopt the implementation of~\cite{tong2024simulation}—specifically, the exact configuration provided in the released GitHub repository (\url{https://github.com/atong01/conditional-flow-matching})—and modify only the time embedding where we use sinusoidal embeddings to be consistent with other models.

For validation, we take one sample from the first heldout timepoint in each experiment and calculate the W1 against that sample only, then based on the optimal hyperparameters for validation test on all other samples at all timepoints.  We use 10 independent random seeds and report mean and standard variation of test error.

\begin{table}[h]
 \caption{Hyperparameter ranges for Finsler metric parameters}
 \label{tab:hyperparams-fin}
\centering
 \begin{tabular}{cc} 
 \hline
 Hyperparameter & Range \\
  \hline
  $\lambda$ & $\{0.2, 0.5, 1.0\}$ \\
  classifier smoothing & $\{0.03, 0.05\}$ \\
  \hline
 \end{tabular}
\end{table}

\begin{table}[h]
 \caption{Hyperparameter ranges for Metric Flow Matching specific parameters}
 \label{tab:hyperparams-mfm}
\centering
 \begin{tabular}{cc} 
 \hline
 Hyperparameter & Range \\
  \hline
  $\#$ clusters $r$ & $\{100, 150, 200\}$ \\
  kernel bandwidth $\kappa$ & $\{1, 1.5\}$ \\
  kernel smoothing parameter $\epsilon$ & $\{0.05, 0.1, 0.2\}$ \\
  use Euclidean optimal transport & $\{True, False\}$ \\
  \hline
 \end{tabular}
\end{table}

\captionsetup[subfigure]{font=small}

\begin{figure*}[t]
  \centering

  \begin{subfigure}[t]{0.6\textwidth}
    \centering
    \includegraphics[width=\textwidth,trim=8 8 8 8,clip]{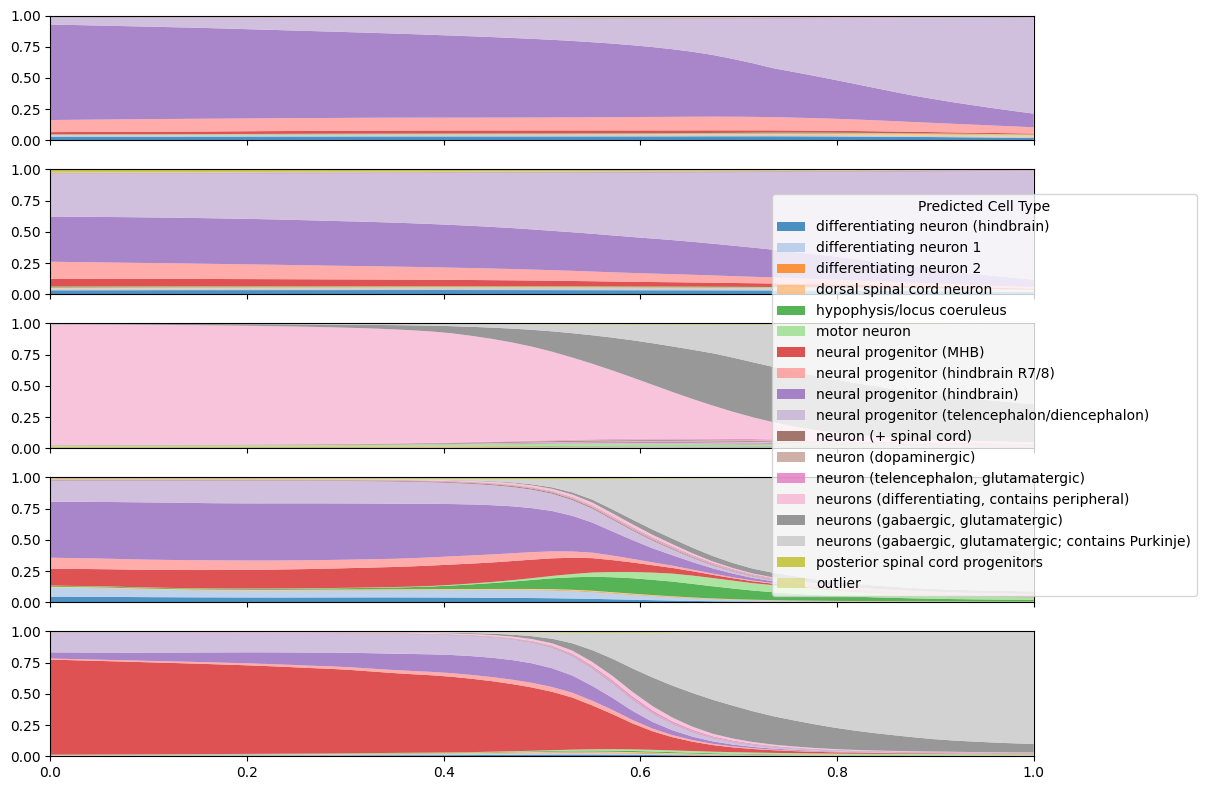}
    \caption{Finsler geometry.}
    \label{fig:zebrafish-river}
  \end{subfigure}
  
  \begin{subfigure}[t]{0.6\textwidth}
    \centering
    \includegraphics[width=\textwidth,trim=8 8 8 8,clip]{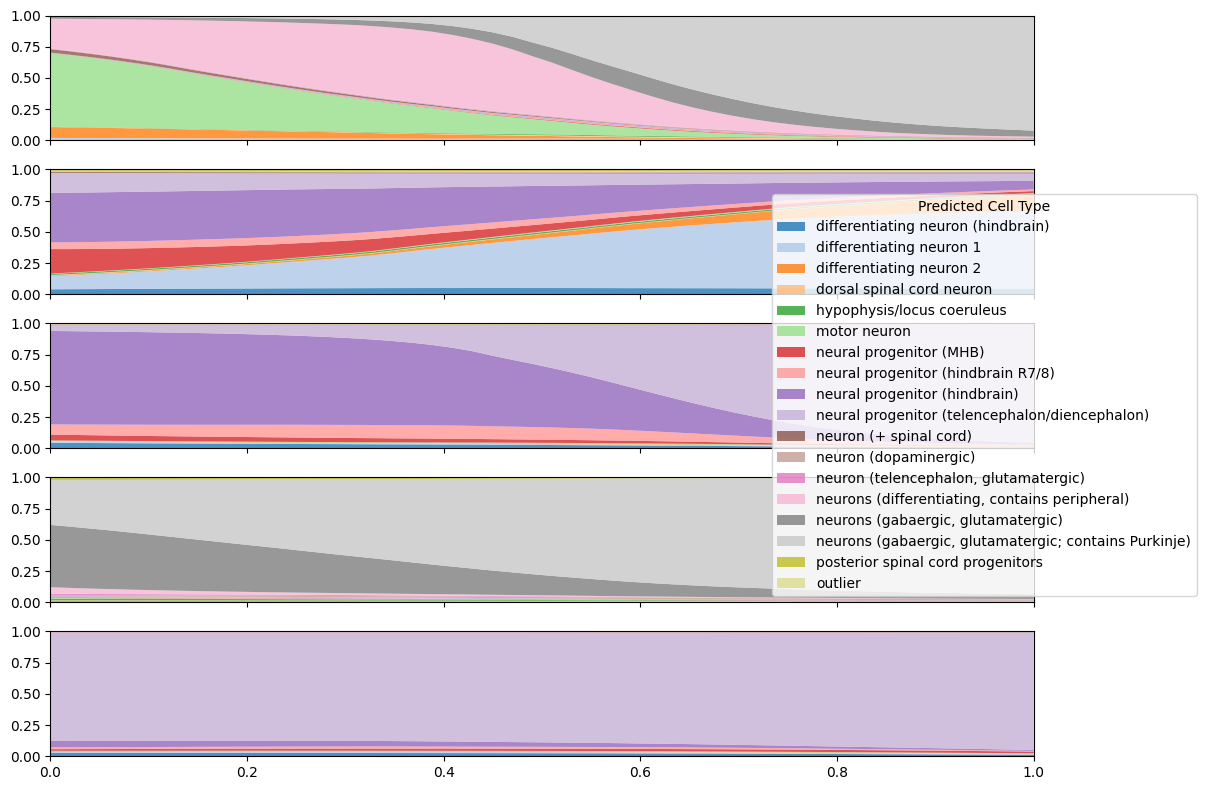}
    \caption{CFM only.}
    \label{fig:zebrafish-river-cfm}
  \end{subfigure}

  \caption{Cell type predictions along five learned trajectories for the Zebrafish-CNS dataset. The x-axis is time and the y-axis is cell type probability. Top and bottom are five sampled trajectories.}
  \label{fig:zebrafish-two-panel}
\end{figure*}

\begin{figure*}[t]
  \centering

  \begin{subfigure}[t]{0.6\textwidth}
    \centering
    \includegraphics[width=\textwidth,trim=8 8 8 8,clip]{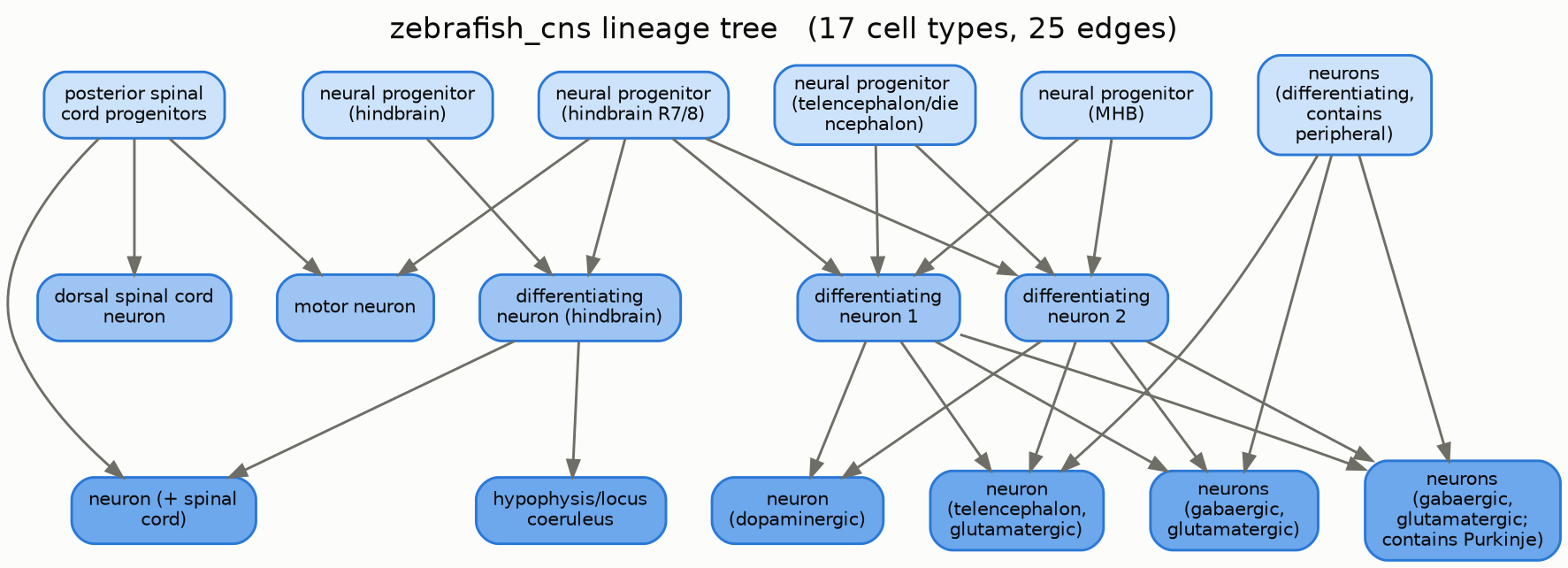}
    \caption{Lineage tree for Zebrafish Central Nervous System}
    \label{fig:lin-zebrafish-cns}
  \end{subfigure}
  
  \begin{subfigure}[t]{0.6\textwidth}
    \centering
    \includegraphics[width=\textwidth,trim=8 8 8 8,clip]{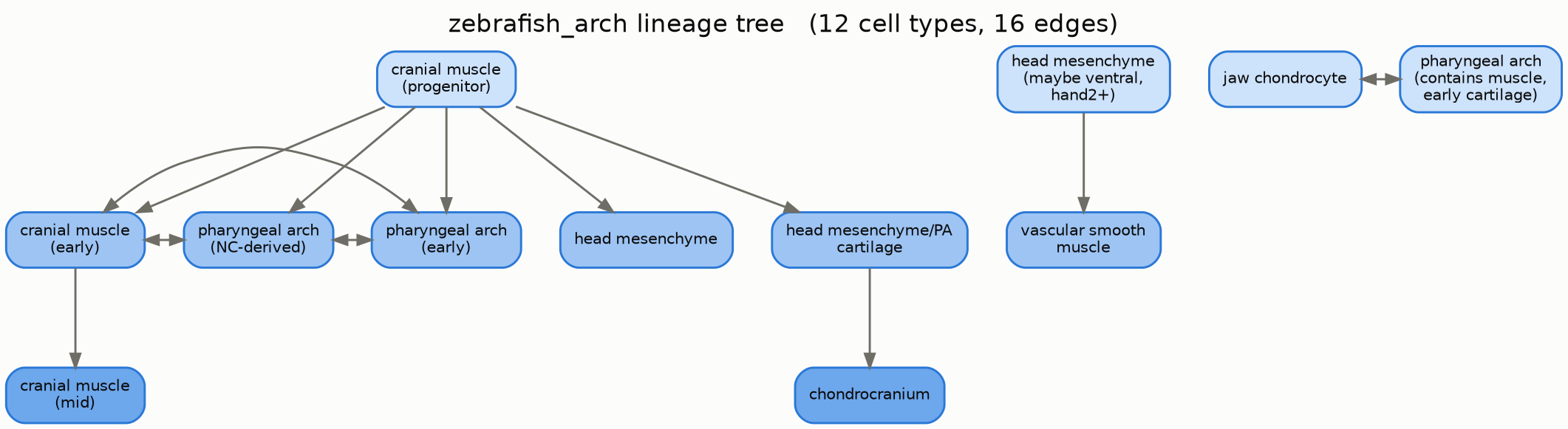}
    \caption{Lineage tree for Zebrafish Pharyngeal Arch}
    \label{fig:lin-zebrafish-arch}
  \end{subfigure}

  \begin{subfigure}[t]{0.3\textwidth}
    \centering
    \includegraphics[width=\textwidth,trim=8 8 8 8,clip]{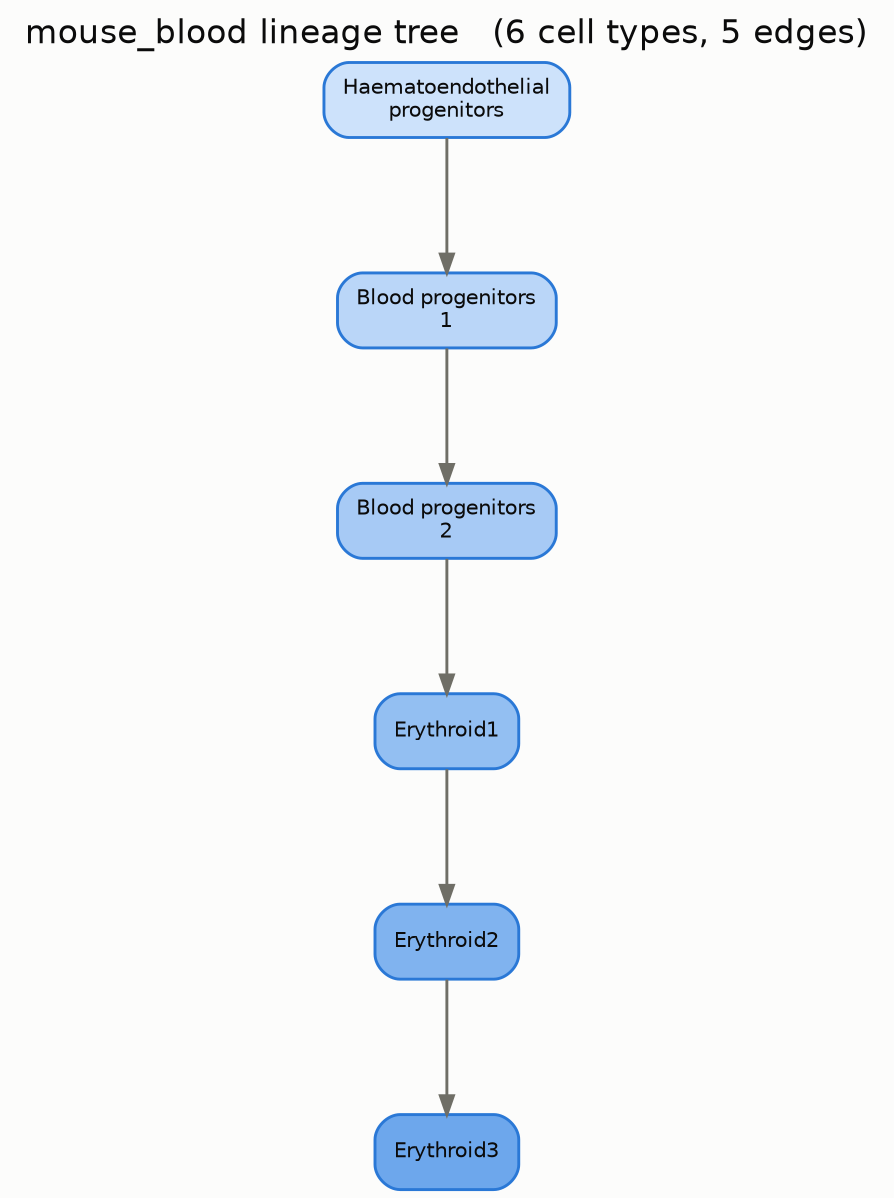}
    \caption{Lineage tree for Mouse Blood}
    \label{fig:lin-mouse}
  \end{subfigure}

  \caption{Lineage tree priors used in all single-cell experiments}
  \label{fig:lineage}
\end{figure*}

\end{document}